\newcommand{\disc}{\mathrm{disc}}
\newcommand{\Diag}{\mathrm{Diag}}
\newcommand{\supp}{\mathrm{supp}}
\newcommand{\fea}{\mathrm{fea}}
\newcommand{\Rfrak}{\mathfrak{R}}
\newcommand{\longname}{Domain AggRegation Network}
\newcommand{\shortname}{DARN}
\title{Domain Aggregation Networks for\\
Multi-Source Domain Adaptation}
\author{
  Junfeng Wen$^*$ \qquad Russell Greiner$^*$ \qquad Dale Schuurmans$^{*\dagger}$\\
  $^*$University of Alberta\qquad $^\dagger$Google Brain\\
  \texttt{junfengwen@gmail.com\ \{rgreiner, daes\}@ualberta.ca} \\
}
\date{}
\begin{document}

\maketitle

\begin{abstract}
In many real-world applications, we want to exploit multiple source datasets of similar tasks to learn a model for a different but related target dataset -- e.g.,  recognizing characters of a new font using a set of different fonts. 
While most recent research has considered ad-hoc combination rules to address this problem, we extend previous work on domain discrepancy minimization to develop a finite-sample generalization bound, and accordingly propose a theoretically justified optimization procedure. 
The algorithm we develop, 
\longname~(\shortname), 
is able to effectively adjust the weight of each source domain during training
to ensure relevant domains are given more importance for adaptation.
We evaluate the proposed method on real-world sentiment analysis and digit recognition datasets and show that \shortname~can significantly outperform the state-of-the-art alternatives.

\end{abstract}

\section{Introduction}
Many machine learning algorithms assume the learned predictor will be tested on the data from the same distribution as the training data.
This assumption, although reasonable, is not necessarily true for many real-world applications. 
For example, patients from one hospital may have a different distribution of gender, height and weight from another hospital. 
Consequently, a diagnostic model constructed at one location may not be directly applicable to another location without proper adjustment. 
The situation becomes even more challenging when we want to use data from multiple \emph{source domains} to build a model for a \emph{target domain}, as this requires deciding, e.g., how to rank the source domains and how to effectively aggregate these domains when training complex models like deep neural networks. 
Including irrelevant or worse, adversarial, data from certain source domains can severely reduce the performance on the target domain, leading to undesired consequences. 

To address this problem, researchers have been exploring methods of \emph{transfer learning}~\citep{pan2009survey} or \emph{domain adaptation}~\citep{mansour2009domain,cortes2019adaptation}, where a model is trained based on labelled source data and unlabelled target data. 
Most existing works have focused on single-source to single-target (``one-to-one'') domain adaptation, using different assumptions such as covariate shift~\citep{shimodaira2000improving,gretton2009covariate,sugiyama2012machine} or concept drift~\citep{jiang2007instance,gama2014survey}. 
When dealing with multiple source domains, one may attempt to directly use these approaches by combining all source data into a large joint dataset and then apply one-to-one adaptation. 
This na\"{i}ve aggregation method will often fail as not all source domains are equally important when transferring to a specific target domain.
There are some works on multi-source to single-target adaptation. 
Although many of them are theoretically motivated with cross-domain generalization bounds, they either use ad-hoc aggregation rules when developing actual algorithms~\citep{zhao2018adversarial,li2018extracting} or lack finite-sample analysis~\citep{mansour2009domain2,mansour2009multiple,hoffman2018algorithms}. 
This leaves a gap between the theory for multi-source adaptation and theoretically sound algorithm for domain aggregation. 

This research has three contributions:
First, we extend prior work on one-to-one adaptation using discrepancy~\citep{cortes2019adaptation} to develop a finite-sample cross-domain generalization bound for multi-source adaptation. 
We show that in order to improve performance on the specific target domain, there is a trade-off between utilizing all source domains to increase effective sample size, versus removing source domains that are underperforming or not similar to the target domain. 
Second, motivated by our theory and domain adversarial method~\citep{ganin2015unsupervised,ganin2016domain}, we propose \longname~(\shortname), that can effectively aggregate multiple source domains dynamically during the course of training. 
Unlike previous works, our aggregation scheme~(\cref{eq:sharpmax}), which itself is of independent interest in some other contexts, is a direct optimization of our generalization upper bound~(\cref{eq:multi_src_bound}) without resorting to heuristics. 
Third, our experiments on sentiment analysis and digit recognition show that \shortname~can significantly outperform state-of-the-art methods.

\cref{sec:background} introduces necessary background on one-to-one adaptation based on discrepancy.
Then \cref{sec:method} elaborates our theoretical analysis and the corresponding algorithm deployment.
\cref{sec:related} discusses about related approaches in more detail, highlighting the key differences to the approach developed here.
\cref{sec:exp} empirically compares the performance of the proposed method to other alternatives.
Finally, \cref{sec:conclude} concludes this work.

\section{Background on Cross-domain Generalization}
\label{sec:background}
This section provides necessary background from previous work on one-to-one domain adaptation~\citep{cortes2019adaptation}. 
Let $\Xcal$ be the input space, $\Ycal\subseteq\RR$ be output space and $\Hcal\subseteq\{h:\Xcal\mapsto\Ycal\}$ be a hypothesis set. A loss function $L:\Ycal\times\Ycal\mapsto\RR^+$ is $\mu$-admissible\footnote{For example, the common $L_p$ losses are $\mu$-admissible~\citep[Lemma 23]{cortes2019adaptation}.}  if
\begin{equation}
\forall y,y',y''\in\Ycal\qquad |L(y',y) - L(y'',y)|
\ \le\ \mu\ |y'-y''|.
\label{eq:admis}
\end{equation}
The \emph{discrepancy}~\citep{mansour2009domain} between two distributions $P,Q$ over $\Xcal$ is defined as
\[
\disc(P,Q) = \max_{h,h'\in\Hcal} |\Lcal_P(h,h')-\Lcal_Q(h,h')|
\quad\text{where}\quad
\Lcal_P(h,h')=\EE_{x\sim P}[L(h(x),h'(x))].
\]
This quantity can be computed or approximated empirically given samples from both distributions.
For classification problems with 0-1 loss, it reduces to the well-known $d_{\Acal}$-distance~\citep{kifer2004detecting,blitzer2008learning,ben2010theory} and can be approximated using a domain-classifier loss w.r.t.\ $\Hcal$~\citep[Sec.4]{zhao2018adversarial,ben2007analysis}, while for regression problems with $L_2$ loss, it reduces to a maximum eigenvalue~\citep[Sec.5]{cortes2011domain}; see \cref{sec:algo} for more details.
For two domains $(P,f_P),(Q,f_Q)$ where $f_P,f_Q:\Xcal\mapsto\Ycal$ are the corresponding labeling functions, we have the following cross-domain generalization bound:

\begin{theorem}[Proposition 5 \& 8, \citet{cortes2019adaptation}]
Let $\Rfrak_m(\Hcal)$ be the Rademacher complexity of $\Hcal$ given sample size $m$, 
$\Hcal_Q=\{x\mapsto L(\ h(x),\ f_Q(x)\ ):h\in\Hcal\}$ be the set of functions mapping $x$ to its loss w.r.t.\ $f_Q$ and $\Hcal$, 
\begin{equation}
\eta_\Hcal = \mu\times\min_{h\in\Hcal}
\left(\max_{x\in\supp(\Phat)}|f_P(x)-h(x)|
+\max_{x\in\supp(\Qhat)}|f_Q(x)-h(x)|\right),
\label{eq:eta}
\end{equation}
be a constant measuring how well $\Hcal$ can fit the true models where $\supp(\Phat)$ means the support of the empirical distribution $\Phat$ (using the $\mu$ from \cref{eq:admis}), and $M_Q=\sup_{x\in\Xcal,h\in\Hcal}L(\ h(x),\ f_Q(x)\ )$ be the upper bound on loss for $Q$. 
Given $\widehat{Q}$ with $m$ points sampled iid from $Q$ labelled according to $f_Q$, for $\delta\in(0,1),\forall h\in\Hcal$, w.p.\ at least $1-\delta$,
\[
\Lcal_P(h,f_P)
\ \le\ 
\Lcal_{\widehat{Q}}(h,f_Q)
+ \disc(P,Q) + 2\mathfrak{R}_m(\Hcal_Q)
+ \eta_\Hcal + M_Q\sqrt{\frac{\log(1/\delta)}{2m}}.
\]
\label{thm:single}
\end{theorem}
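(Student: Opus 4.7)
The plan is a four-term triangle decomposition tailored to $\mu$-admissible losses, with the discrepancy playing the role of a distribution-swap cost. I would introduce a single auxiliary hypothesis $h^\star \in \Hcal$ chosen to attain the minimum in the definition~\eqref{eq:eta} of $\eta_\Hcal$, use admissibility twice to exchange labelling functions for $h^\star$ on each side, invoke $\disc(P,Q)$ to swap the measure from $P$ to $Q$, and close with a standard Rademacher-based uniform convergence bound over the loss class $\Hcal_Q$.

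More concretely, fix $h \in \Hcal$ and let $h^\star$ attain the minimum in~\eqref{eq:eta}. Applying $\mu$-admissibility pointwise and integrating against $P$ gives
\[
\Lcal_P(h, f_P) \;\le\; \Lcal_P(h, h^\star) + \mu \max_{x \in \supp(\widehat{P})} |f_P(x) - h^\star(x)|.
\]
By the definition of discrepancy (and the fact that both $h, h^\star \in \Hcal$),
\[
\Lcal_P(h, h^\star) \;\le\; \Lcal_Q(h, h^\star) + \disc(P, Q).
\]
A symmetric admissibility step on the $Q$-side yields
\[
\Lcal_Q(h, h^\star) \;\le\; \Lcal_Q(h, f_Q) + \mu \max_{x \in \supp(\widehat{Q})} |f_Q(x) - h^\star(x)|.
\]
Summing the two admissibility slacks collapses into exactly $\eta_\Hcal$ by the choice of $h^\star$, leaving $\Lcal_Q(h, f_Q)$ as the only remaining population quantity.

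The last step is to replace $\Lcal_Q(h, f_Q)$ by $\Lcal_{\widehat{Q}}(h, f_Q)$ uniformly over $h$. Since functions in $\Hcal_Q$ take values in $[0, M_Q]$, the standard Rademacher symmetrization together with McDiarmid's inequality gives, with probability at least $1-\delta$,
\[
\sup_{h \in \Hcal} \bigl(\Lcal_Q(h, f_Q) - \Lcal_{\widehat{Q}}(h, f_Q)\bigr) \;\le\; 2\mathfrak{R}_m(\Hcal_Q) + M_Q\sqrt{\frac{\log(1/\delta)}{2m}}.
\]
Chaining the four inequalities produces exactly the stated bound. The main subtlety, rather than a hard step, is the commitment to a \emph{single} $h^\star$ that simultaneously controls both admissibility gaps: picking different auxiliaries on the two sides would inflate the constant beyond $\eta_\Hcal$. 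A secondary bookkeeping issue is the passage $\EE_{x\sim P}[\cdot] \le \max_{x\in\supp(\widehat{P})}[\cdot]$, which tacitly identifies the population and empirical supports; this is the only place where one must either invoke an assumption or absorb a negligible slack. Modulo this, the proof is essentially a triangle inequality, the definition of $\disc$, and a textbook Rademacher bound, following Propositions~5 and~8 of~\citet{cortes2019adaptation}.
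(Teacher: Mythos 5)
Your proof is correct and takes essentially the same route as the source the paper itself relies on for this statement (Propositions 5 and 8 of \citet{cortes2019adaptation}): the paper does not reprove \cref{thm:single}, and its Appendix~A proof of \cref{thm:multi} reuses exactly your decomposition domain by domain --- admissibility twice through a single $h^\star$, the discrepancy for the measure swap, and Rademacher symmetrization plus McDiarmid for the empirical step with the one-sided $\sqrt{\log(1/\delta)/(2m)}$ tail. The two subtleties you flag (committing to one $h^\star$ for both slack terms, and the passage $\EE_{x\sim P}[\cdot]\le\max_{x\in\supp(\widehat{P})}[\cdot]$, which is inherited from the statement's use of empirical supports in \cref{eq:eta}) are the right ones; the only additional pedantic point is that exchanging the \emph{second} argument of $L$ requires symmetry of the loss, since \cref{eq:admis} as written is Lipschitz only in the first argument.
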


This theorem provides a way to generalize across domains when we have sample $\Qhat$ labelled according to $f_Q$ and an unlabelled sample $\Phat$. 
The first term is the usual loss function for the sample $\Qhat$, while the second term $\disc(P,Q)$ can be estimated based on the unlabelled data $\Qhat,\Phat$. 
$\eta_\Hcal$ measures how well the model family $\Hcal$ can fit the example from the datasets, and it is not controllable once $\Hcal$ is given.
The final term, as a function of sample size $m$, determines the convergence speed.

\section{Domain Aggregation Networks}
\label{sec:method}
\subsection{Theory}
\label{sec:multi_theory}

Suppose we are given $k$ source domains $\{(S_i,f_{S_i}):i\in[k]\defeq\{1,2,\dots,k\}\}$ and a target domain $(T,f_T)$ where $S_i,T$ are distributions over $\Xcal$ and $f_{S_i},f_T:\Xcal\mapsto\Ycal$ are their respective labelling functions. 
For simplicity, assume that each sample $\Shat_i$ has $m$ points, drawn iid from $S_i$ and labelled according to $f_{S_i}$. 
We are also given $m$ unlabelled points $\That$ drawn iid from $T$. 
We want to leverage all source domains' information to learn a model $h\in\Hcal$ minimizing $\Lcal_T(h,f_T)$. 

One na\"{i}ve approach could be to combine all the source domains into a large joint dataset and conduct one-to-one adaptation to the target domain using Theorem~\ref{thm:single}. 
However, including data from irrelevant or even adversarial domains is likely to jeopardize the performance on the target domain, which is sometimes referred to as \emph{negative transfer}~\citep{pan2009survey}. 
Moreover, as certain source domains may be more similar or relevant to the target domain than the others, it makes more sense to adjust their importance according to their utilities. 
We propose to find domain weight $\alpha_i\ge0$ such that $\sum_{i=1}^k\alpha_i=1$ to achieve this. 
Our main theorem below sheds some light on how we should combine the source domains (the proof is provided in \cref{appendix:proof}):

\begin{theorem}
Given $k$ source domains datasets $\{(x^{(i)}_j,y^{(i)}_j):i\in[k],j\in[m]\}$ with $m$ iid examples each where $\Shat_i=\{x^{(i)}_j\}$ and $y^{(i)}_j=f_{S_i}(x^{(i)}_j)$, for any $\alphavec\in\Delta=\{\alphavec:\alpha_i\ge 0,\sum_i\alpha_i=1\}, \delta\in(0,1)$, and $\forall h\in\Hcal$, w.p.\ at least $1-\delta$
\begin{equation}
\Lcal_T(h,f_T)\le
\sum_i \alpha_i\left(\Lcal_{\Shat_i}(h,f_{S_i})
+ \disc(T,S_i) + 2\Rfrak_m(\Hcal_{S_i})
+ \eta_{\Hcal,i}\right)
+ \|\alphavec\|_2 M_S \sqrt{\frac{\log(1/\delta)}{2m}},
\label{eq:multi_src_bound}
\end{equation}
where $\Hcal_{S_i}=\{x\mapsto L(h(x),f_{S_i}(x)):h\in\Hcal\}$ is the set of functions mapping $x$ to the corresponding loss,
$\eta_{\Hcal,i}$ is a constant similar to \cref{eq:eta} with $\Qhat=\Shat_i,\ \Phat=\That$ and $M_S=\sup_{i\in[k],x\in\Xcal,h\in\Hcal}L(h(x),f_{S_i}(x))$ is the upper bound on loss on the source domains. 
\label{thm:multi}
\end{theorem}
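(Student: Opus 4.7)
The plan is to reduce the multi-source statement to the single-source bound of Theorem~\ref{thm:single} by exploiting the fact that $\sum_i \alpha_i = 1$, while being careful about the concentration step so that the square root term scales with $\|\alphavec\|_2$ rather than $\|\alphavec\|_1$ (which is what a crude union bound would yield). As a first step I would separate the deterministic ``cross-domain bias'' part of Theorem~\ref{thm:single} from its sampling/Rademacher part. Inspecting that proof, the deterministic part gives, for each source $S_i$, the pointwise inequality
\begin{equation*}
\Lcal_T(h,f_T)\ \le\ \Lcal_{S_i}(h,f_{S_i}) + \disc(T,S_i) + \eta_{\Hcal,i}
\end{equation*}
(with populations, not samples, on the right). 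Multiplying by $\alpha_i$, summing over $i$, and using $\sum_i \alpha_i = 1$ immediately yields
\begin{equation*}
\Lcal_T(h,f_T)\ \le\ \sum_i \alpha_i\bigl(\Lcal_{S_i}(h,f_{S_i}) + \disc(T,S_i) + \eta_{\Hcal,i}\bigr).
\end{equation*}
This already produces the first three families of terms in \cref{eq:multi_src_bound}; what remains is to pass from the population losses $\Lcal_{S_i}(h,f_{S_i})$ to the empirical losses $\Lcal_{\Shat_i}(h,f_{S_i})$ uniformly over $h$.

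For that, I would define the single scalar functional
\begin{equation*}
\Phi(\Shat_1,\dots,\Shat_k)\ =\ \sup_{h\in\Hcal}\ \sum_i \alpha_i\bigl(\Lcal_{S_i}(h,f_{S_i}) - \Lcal_{\Shat_i}(h,f_{S_i})\bigr)
\end{equation*}
and apply McDiarmid's inequality to the full collection of $km$ independent sample points. Replacing a single point inside $\Shat_i$ changes $\Phi$ by at most $\alpha_i M_S / m$, so the sum of squared bounded differences is
\begin{equation*}
\sum_i m\cdot\left(\frac{\alpha_i M_S}{m}\right)^2 = \frac{M_S^2\,\|\alphavec\|_2^2}{m},
\end{equation*}
which is precisely where the $\|\alphavec\|_2$ factor (rather than $\|\alphavec\|_1$) enters. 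McDiarmid then gives $\Phi \le \EE[\Phi] + \|\alphavec\|_2\,M_S\sqrt{\log(1/\delta)/(2m)}$ with probability at least $1-\delta$.

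It remains to control $\EE[\Phi]$ by a weighted Rademacher complexity. I would run the standard symmetrization argument on the joint sample: introduce an independent ghost sample $\Shat_i'$ for each source, interchange $\sup$ and $\EE$, and insert iid Rademacher signs $\sigma_j^{(i)}$ for every point in every source. Because the suprema over $h$ are coupled across $i$ one obtains, after pulling the sum over $i$ outside the supremum (which only enlarges things), the decomposition $\EE[\Phi] \le 2\sum_i \alpha_i\,\Rfrak_m(\Hcal_{S_i})$. Chaining the three displayed inequalities yields \cref{eq:multi_src_bound} for a single $h$; since all uniformity is already baked into the $\sup_h$ inside $\Phi$, the bound holds simultaneously for every $h\in\Hcal$.

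The main obstacle, and the only place one needs to be delicate, is the concentration step: a naive application of Theorem~\ref{thm:single} to each source separately together with a union bound over $i\in[k]$ would give a $\|\alphavec\|_1\, M_S\sqrt{\log(k/\delta)/(2m)}$ term, losing both the $\ell_2$ improvement and the $k$-independence. Handling all $km$ coordinates inside one McDiarmid application, with per-coordinate bounded difference $\alpha_i M_S/m$, is what produces the sharper $\|\alphavec\|_2$ scaling that the theorem advertises (and that \cref{eq:sharpmax} later exploits). The symmetrization is routine but worth stating carefully to justify why the single coupled supremum can be bounded by the weighted sum of the per-source Rademacher complexities $\Rfrak_m(\Hcal_{S_i})$.
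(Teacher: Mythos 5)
Your proposal is correct and follows essentially the same route as the paper's proof: a single application of McDiarmid's inequality over all $km$ sample points with per-coordinate bounded difference $\alpha_i M_S/m$ (yielding the $\|\alphavec\|_2$ factor), subadditivity of the supremum to decouple the per-source symmetrization terms into $2\sum_i\alpha_i\Rfrak_m(\Hcal_{S_i})$, and the single-source population bound of \citet[Proposition~8]{cortes2019adaptation} for the $\disc(T,S_i)+\eta_{\Hcal,i}$ terms. The only difference is cosmetic: you peel off the deterministic cross-domain part before defining $\Phi$, whereas the paper folds $\Lcal_T(h,f_T)$ into $\Phi$ and performs the same split inside the bound on $\EE[\Phi]$.
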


There are several observations. 
(1)~In the last term of the bound, $m/\|\alphavec\|_2^2$ serves as the effective sample size. 
If $\alphavec$ is uniform (i.e., $[1/k,\dots,1/k]^\top$), then the effective sample size is $km$; if $\alphavec$ is one-hot, then we effectively only have $m$ points from exactly one domain. 
(2)~$\Rfrak_m(\Hcal_{S_i})$ determines how expressive the model family $\Hcal$ is w.r.t.\ the source data $S_i$. 
It can be estimated from samples~\citep[Theorem~11]{bartlett2002rademacher}, but the computation is non-trivial for a model family like deep neural networks. 
The $\eta_{\Hcal,i}$ is uncontrollable once the hypothesis class $\Hcal$ (e.g., neural network architecture) is given.
Using a richer $\Hcal$ is not always beneficial.
Richer $\Hcal$ can reduce the $\eta_{\Hcal,i}$ and also help us find a better function $h$ with smaller source losses $\Lcal_{\Shat_i}$, but it will increase the $\Rfrak_m(\Hcal_{S_i})$. 
As removing these two terms does not seem to be empirically significant, we ignore them below for simplicity. 
(3)~Let $g_{h,i}\defeq\Lcal_{\Shat_i}(h,f_{S_i})+\disc(T,S_i)$. 
Small $g_{h,i}$ indicates that we can achieve small loss on domain $S_i$, and it is similar to the target domain (i.e., small $\disc(T,S_i)$, estimated from $\That,\Shat_i$). 
We may want to emphasize on $S_i$ by setting $\alpha_i$ close to 1, but this will reduce the effective sample size. 
Therefore, we have to trade-off between the terms in this bound by choosing a proper $\alphavec$. 
(4)~When $S_i$ and $T$ are only partially overlapped, it might be difficult to find a suitable $\alpha_i$. 
In such cases, we can artificially split the given source domains into smaller datasets (e.g., by using clustering) then apply our method on the finer scale. 
This strategy requires that the learning algorithm has low computational complexities w.r.t. $k$, the number of source domains. 
As we will see later in \cref{sec:complexity}, this is indeed the case for our algorithm.

Before we proceed to develop an algorithm based on the theorem, let us compare \cref{eq:multi_src_bound} to existing finite-sample bounds. 
The bound of \citet[Theorem~3]{blitzer2008learning} is informative when we have access to a small set of \emph{labelled} target examples. 
In such cases, we can improve our bound by using this small labelled target set as an additional source domain $S_{k+1}$. 
We can also perform better model selection using such labelled set. 
The bound of \citet[Theorem~2]{zhao2018adversarial} is based on the $d_{\Acal}$-distance, which is a special case of $\disc$ in our bound. 
Moreover, our bound (\cref{eq:multi_src_bound}) use sample-based Rademacher complexity, which is generally tighter than other complexity measures such as VC-dimension~\citep{bartlett2002rademacher,koltchinskii2002empirical,bousquet2003introduction}. 

\subsection{Algorithm}
\label{sec:algo}

In this section, we illustrate how to develop a practical algorithm based on Theorem~\ref{thm:multi}. 
Ignoring the constants, we would like to minimize the upper bound of \cref{eq:multi_src_bound}:
\begin{equation}
\min_{h\in\Hcal}\ 
\min_{\alphavec\in\Delta}\quad 
U_h(\alphavec)= \langle \gvec_h,\alphavec\rangle + \tau\|\alphavec\|_2
\label{eq:upper_bound_opt}
\end{equation}
where $\gvec_h=[g_{h,1},\dots,g_{h,k}]^\top$ and $\tau>0$ is a hyper-parameter. 
If we can solve the inner minimization exactly given $h$, then we can treat the optimal $\alphavec^*(h)$ as a function of $h$ and solve the outer minimization over $h$ effectively. 
In the following, we show how to achieve this.

Given $\gvec_h$, the inner minimization can be reformulated as a second-order cone programming problem, but it has no closed-form solution due to the $\tau\|\alphavec\|_2$ term. 
Consider the following problem ($\zvec=-\gvec_h/\tau$ recovers the inner minimization):
\begin{equation}
\min_{\alphavec\in\Delta}\quad 
-\langle \zvec,\alphavec\rangle + \|\alphavec\|_2
\label{eq:inner_opt}
\end{equation}
The Lagrangian for its dual problem is
\[
\Lambda(\alphavec,\lambdavec,\nu)
= -\zvec^\top \alphavec + \|\alphavec\|_2
- \lambdavec^\top \alphavec + \nu (\onevec^\top\alphavec-1)
\qquad
\nu\in\RR,\lambdavec\succeq0
\]
Taking the derivative w.r.t. $\alphavec$ and setting it to zero gives
\[
\frac{\partial\Lambda}{\partial\alphavec}
= -\zvec + \alphavec/\|\alphavec\|_2
-\lambdavec+\nu\onevec = 0
\quad\Longrightarrow\quad 
\alphavec^*/\|\alphavec^*\|_2 
= \zvec - \nu\onevec + \lambdavec.
\]
Notice that $\alphavec\neq\zerovec$ so we have the constraint $\|\zvec-\nu\onevec+\lambdavec\|_2=1$.
Using this $\alphavec^*$ in $\Lambda$ gives the following dual problem
\[
\max_{\nu,\lambdavec}\quad -\nu
\qquad
\text{s.t.}
\quad \|\zvec-\nu\onevec+\lambdavec\|_2=1
\quad\text{and}\quad
\lambdavec\succeq0
\]

\begin{wrapfigure}{r}{0.5\textwidth}
\centering
\includegraphics[width=0.5\textwidth]{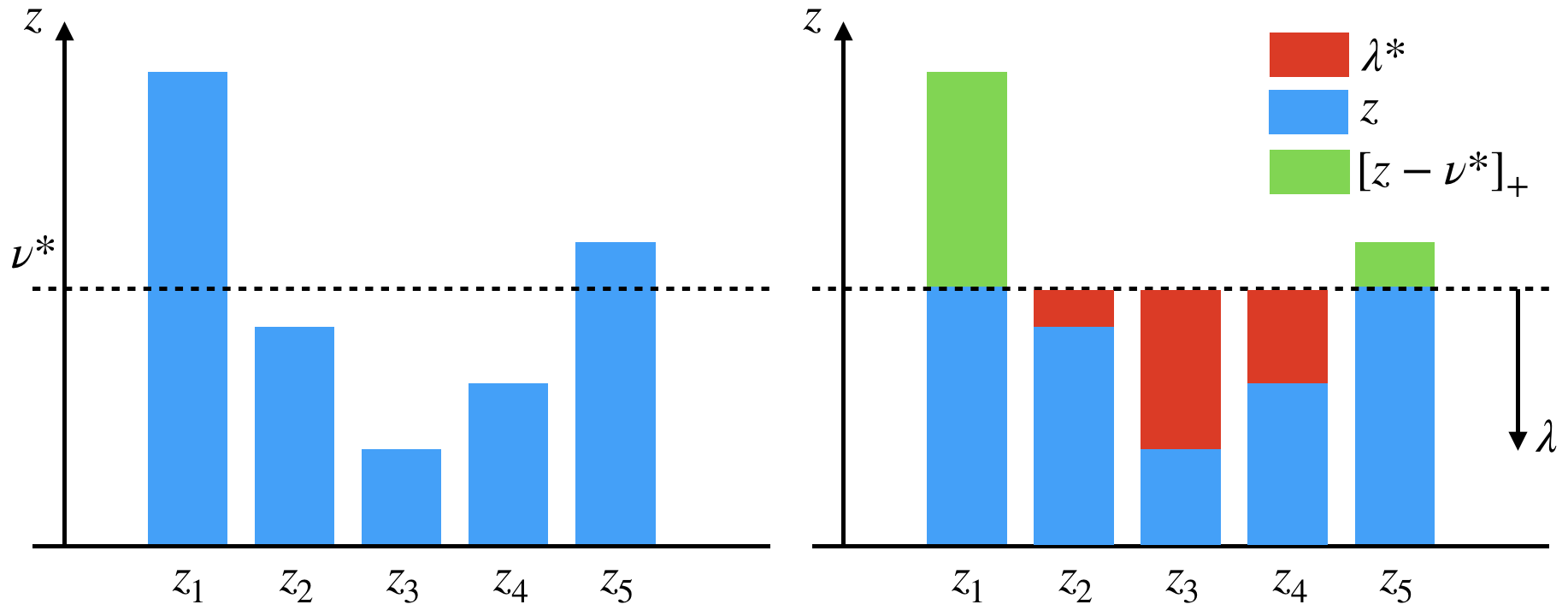}
\caption{Optimal solution to \cref{eq:inner_opt} (best viewed in color). $\lambdavec^*$ compensates what is below the $\nu^*$, while $\nu^*$ is chosen such that the vector of the green bars has $L_2$ norm of 1.}
\label{fig:L2_proj}
\end{wrapfigure}

We would like to \emph{decrease} $\nu$ as much as possible, and the best we can attain is the $\nu^*$ satisfying $\|[\zvec-\nu^*\onevec]_+\|_2=1$ where $[\vvec]_+=\max(\zerovec,\vvec)$. 
In this case, the optimal $\lambdavec^*$ can be attained as $\lambda^*_i=0$ if $z_i-\nu^*>0$, otherwise $\lambda^*_i=\nu^*-z_i$; 
see \cref{fig:L2_proj}. 
Although we do not have a closed-form expression for the optimal $\nu^*$, we can use binary search to find it, starting from the interval $[z_{\min}-1,z_{\max}]$ where $z_{\min},z_{\max}$ are the minimum and maximum of $\zvec$ respectively.
Then we can recover the primal solution as
\begin{equation}
\alphavec^*=[\zvec-\nu^*\onevec]_+
/\|[\zvec-\nu^*\onevec]_+\|_1.
\label{eq:sharpmax}
\end{equation}
\cref{eq:sharpmax} gives rise to a new way to project any vector $\zvec$ to the probability simplex, which is of independent interest and may be used in some other contexts.\footnote{For example, if we consider $\zvec$ to be the logits of the final classification layer of a neural network, this projection provides another way to produce class probabilities similar to the softmax transformation.}
It resembles the standard projection onto the simplex based on \emph{squared} Euclidean distance~\citep[Eq.(3)]{duchi2008efficient}. 
One subtle but crucial difference is that our \cref{eq:inner_opt} uses $\|\valpha\|_2$ instead of $\|\valpha\|_2^2$.
Recall that $\zvec=-\gvec_h/\tau$. 
Here $\tau$ can be interpreted as a temperature parameter. 
On one hand, if $\tau\gg0$, all $\zvec$ will have similar values and thus the optimal $\nu^*$ will be close to $z_{\max}$ and $\alphavec^*$ will be close to uniform.
On the other hand, as $\tau\to0$, $z_{\max}$ will stand out from the rest $z_i$ and eventually $\nu^*=z_{\max}-1$. 
This means the $g_{h,i}$ corresponding to the $z_{\max}$ is small enough so we focus solely on this domain and ignore all other domains (even though this will reduce effective sample size as discussed in \cref{sec:multi_theory}). 

\begin{figure}
\centering
\includegraphics[width=0.95\textwidth]{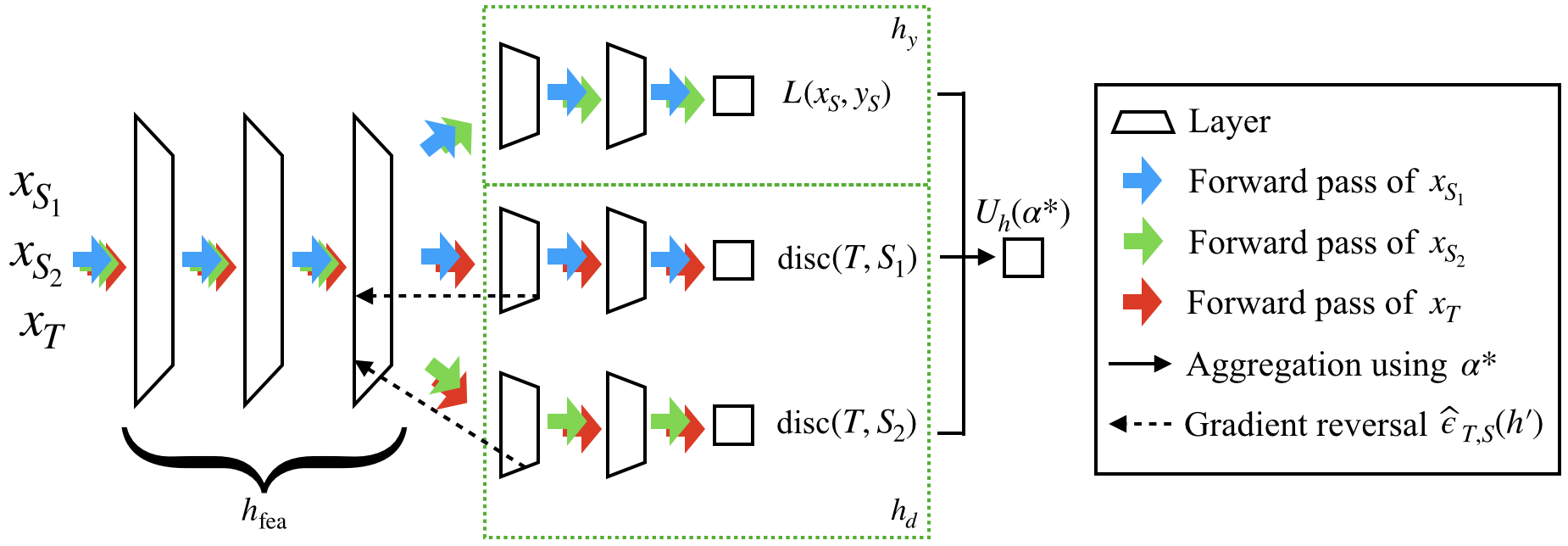}
\caption{Model architecture with two source domains (best viewed in color).  Mini-batches of $x_{S_i}$ and $x_T$ are fed to the network. $x_{S_i}$ will go through the classification/regression path on the upper $h_y$ box, while all $x$ will go through to the corresponding discrepancies (in the lower $h_d$ box). The gradients from the discrepancies will be reverted during backpropagation.}
\label{fig:model}
\end{figure}

From a different perspective, if we define $F_\Delta^*(\alphavec)=\|\alphavec\|_2$, then \cref{eq:inner_opt} equals $-F_\Delta(\zvec)$, where $F_\Delta,F_\Delta^*$ are convex conjugates of each other, where we use $\Delta$ to emphasize their dependency on the simplex domain. 
Then our objective \cref{eq:upper_bound_opt} can be expressed as 
\[
\min_{h\in\Hcal}\quad -F_\Delta(\zvec)
= -F_\Delta(-\gvec_h/\tau)
\]
We can optimize our objective and train a neural network $h$ using gradient-based optimizer. 
The optimal $\alphavec^*(h)$ is merely a function of $h$ and we can backprop through it. 
To facilitate the gradient computation and show that we can efficiently backprop through the projection of \cref{eq:sharpmax}, we derive the Jacobian $J=\partial\alphavec/\partial\zvec$ in \cref{appendix:jac}. 
This ensures effective end-to-end training.

Now we elaborate more on how to compute $g_{h,i}$. 
The $\Lcal_{\Shat_i}(h,f_{S_i})$ is the task loss. 
The $\disc(T,S_i)$ depends on whether the task is classification or regression. 
For classification, $\disc$ coincides with the $d_{\Acal}$-distance, so we use the domain classification loss~\citep{zhao2018adversarial,ben2007analysis}: 
\[
\disc(\That,\Shat_i)=2\left(1-
\min_{h_d}\ \widehat{\epsilon}_{T,S_i}(h_d)\right)
\qquad
\widehat{\epsilon}_{T,S_i}(h_d)
=\frac{1}{2m}\sum_{i=1}^{2m}
\left|h_d(x)-\delta_{x\in\That}\right|
\]
where $\widehat{\epsilon}_{T,S_i}(h_d)$ is the sample domain classification loss of a domain classifier $h_d:\Xcal\mapsto\{0,1\}$. 
This minimization over $h_d$ will become maximization once we move it outside of the $\disc$ due to the minus sign. 
Then our objective consists of $\min_h$ and $\max_{h_d}$, which resembles adversarial training~\citep{goodfellow2014generative}: learning a task classifier $h$ to minimize loss and a domain classifier $h_d$ to maximize domain confusion.
More specifically, if we decompose the neural network $h$ into a feature extractor $h_\fea$ and a label predictor $h_y$ (i.e., $h(x)=h_y(h_\fea(x))$), we can learn a domain-classifier $h_d$ on top of $h_\fea$ to classify $h_\fea(x)$ between $S_i$ and $T$ as a binary classification problem, where the domain itself is the label (see \cref{fig:model}). 
To achieve this, we use the logistic loss to approximate $\widehat{\epsilon}_{T,S_i}(h_d)$ and apply the gradient reversal layer~\citep{ganin2015unsupervised,ganin2016domain} when optimizing $\disc$ through backpropagation. 

If we are solving regression problems with $L_2$ loss, then $\disc(\That,\Shat_i)=\|M_T-M_{S_i}\|_2$ is the largest eigenvalue (in magnitude) of the difference of two matrices $M_T=\frac{1}{m}\sum_j h_{\fea}(x_j^{(T)})h^\top_{\fea}(x_j^{(T)})$ and $M_{S_i}=\frac{1}{m}\sum_j h_{\fea}(x_j^{(i)})h^\top_{\fea}(x_j^{(i)})$~\citep[Sec.5]{mansour2009domain,cortes2011domain}, which can be conveniently approximated using mini-batches and a few steps of power iteration. 

\subsection{Complexity}
\label{sec:complexity}

Here we analyze the time and space complexities of the algorithm in each iteration. 
Similar to MDAN~\citep{zhao2018adversarial}, in each gradient step, we need to compute the task loss $\Lcal_{\Shat_i}(h,f_{S_i})$ and the $\disc(T,S_i)$ (or $d_\Acal$-distance) using mini-batches from each source domain $i\in[k]$. 
The question is whether one can maintain the $O(k)$ complexity given that we need to compute the weights using \cref{eq:sharpmax} and backprop through it. 
For the forward computation of $\alphavec^*$, in order to compute the threshold $\nu^*$ to the $\epsilon>0$ relative precision, the binary search will cost $O(k\log(1/\epsilon))$. 
As for the backward pass of gradient computation, according to our calculation in \cref{appendix:jac}, the Jacobian $J=\partial\alphavec/\partial\zvec$ has a concise form, meaning that it is possible to compute the matrix-vector product $J\vvec$ for a given vector $\vvec$ in $O(k)$ time and space. 
Therefore, our space complexity is the same as MDAN but our time complexity is slightly slower by a factor of $\log(1/\epsilon)$.
In comparison, the time complexity for MDMN~\citep{li2018extracting} is $O(k^2)$ because it requires computing the pairwise weights within the $k$ source domains. 
When we have a lot of source domains, MDMN will be noticeably slower than MDAN and our \shortname.

\section{Related Work}
\label{sec:related}
The idea of utilizing data from the source domain $(S,f_S)$ to train a model for a different but related target domain $(T,f_T)$ has been explored extensively for the last decade using different assumptions~\citep{pan2009survey,zhang2015multi}.
For instance, the covariate shift scenario~\citep{shimodaira2000improving,gretton2009covariate,sugiyama2012machine,wen2014robust} assumes $S\neq T$ but $f_S=f_T$, while concept drift~\citep{jiang2007instance,gama2014survey} assumes $S=T$ but $f_S\neq f_T$. 
More specifically, \cref{thm:single}~\citep{cortes2019adaptation} shows that both covariate shift~(as measured by the discrepancy $\disc$) and model misspecification~(as controlled by $\eta_\Hcal$) contribute to the adaptation performance~\citep{wen2014robust}. 

Finding a domain-invariant feature space by minimizing a distance measure is common practice in domain adaptation, especially for training neural networks. 
\citet{tzeng2017adversarial} provided a comprehensive framework that subsumes several prior efforts on learning shared representations across domains~\citep{tzeng2015simultaneous,ganin2016domain}. 
\shortname~uses adversarial domain classifier and the gradient reversal trick from \citet{ganin2016domain}. 
Instead of proposing a new loss for each pair of the source and target domains, our main contribution is the aggregation technique of computing the mixing coefficients $\alphavec$, which is derived from theoretical guarantees. 
When dealing with multiple source domains, our aggregation method can certainly be applied to other forms of discrepancies such as MMD~\citep{gretton2012kernel,long2015learning,long2016unsupervised}, and other model architectures such as Domain Separation Network~\citep{bousmalis2016domain}, cycle-consistent model~\citep{hoffman2018cycada}, class-dependent adversarial domain classifier~\citep{pei2018multi} and Known Unknown Discrimination~\citep{schoenauer-sebag2018multidomain}. 

Our work focuses on multi-source to single-target adaptation, which has been investigated in the literature. 
\citet{sun2011two} developed a generalization bound but resorted to heuristic algorithms to adjust distribution shifts. 
\citet{zhao2018adversarial} proposed a certain ad-hoc scheme for the combination coefficients $\alphavec$, which, unlike ours, are not theoretically justified. 
Multiple Domain Matching Network~(MDMN)~\citep{li2018extracting} computes domain similarities not only between the source and target domains but also within the source domain themselves based on Wasserstein-like measure. 
Calculating such pairwise weights can be computationally demanding when we have a lot of source domains. 
Their bound requires additional smooth assumptions on the labelling functions $f_{S_i},f_T$, and is not a finite-sample bound, as opposed to ours. 
As for the actual algorithm, they also use ad-hoc coefficients $\alphavec$ without theoretical justification. 
\citet{mansour2009domain2,mansour2009multiple} consider multi-source adaptation where $T=\sum_i\beta_i S_i$ is a convex mixture of source distributions with some weights $\beta_i$. 
Our analysis does not require this assumption. 
\citet{hoffman2018algorithms} provides similar guarantees with different assumptions, but unlike ours, their bounds are not finite-sample bounds.

\section{Experiments}
\label{sec:exp}
In this section, we compare \shortname~with several other baselines and state-of-the-art methods for the  popular tasks: sentiment analysis and digit recognition. 
The following methods are compared. 
(1)~The \textbf{SRC}~(for source) method uses only labelled source data to train the model. 
It merges all available source examples to form a large dataset to perform training without adaptation.
(2)~\textbf{TAR}~(for target) is another baseline that uses only $m$ labelled target data instances. 
It serves as upper bound of the best we can achieve if we had access to the true label of the target data. 
(3)~Domain Adversarial Neural Network~(\textbf{DANN})~\citep{ganin2016domain} is similar to our method in that we both use adversarial training objectives. 
It is not obvious how to adapt DANN to the multi-source setting. Here we follow previous protocol~\citep{zhao2018adversarial} and merge all source data to form a large joint source dataset of $km$ instances for DANN to perform adaptation.
(4)~Multisource Domain Adversarial Network~(\textbf{MDAN})~\citep{zhao2018adversarial} resembles our method in that we both dynamically assign each source domain an importance weight during training. 
However, unlike ours, their weights are not theoretically justified. 
We use the soft version of MDAN since they reported that it performs better than the hard version, and we use their code. 
(5)~Multiple Domain Matching Network~(\textbf{MDMN})~\citep{li2018extracting} computes weights not only between source and target domains but also within source domain themselves. 
We use their code of computing weights in our implementation. 
All the methods are applied to the same neural network structure to ensure fair comparison. 

\subsection{Sentiment Analysis}

\textbf{Setup}. 
We use the Amazon review dataset~\citep{blitzer2007biographies,chen2012marginalized} that consists of positive and negative product reviews from four domains (Books, DVD, Electronics and Kitchen). 
Each of them is used in turn as the target domain and the other three are used as source domains. 
Their sample sizes are 6465, 5586, 7681, 7945 respectively. 
We follow the common protocol~\citep{chen2012marginalized,zhao2018adversarial} of using the top-5000 frequent unigrams/bigrams of all reviews as bag-of-words features and train a fully connected model (MLP) with $[1000,500,100]$ hidden units for classifying positive versus negative reviews. 
The dropout drop rate is 0.7 for the input and hidden layers. 
In each run, we randomly sample 2000 reviews from each domain as labelled source or unlabelled target training examples, while the remaining instances are used as test examples for evaluation.
The hyper-parameters are chosen based on cross-validation. 
The model is trained for 50 epochs and the mini-batch size is 20 per domain. 
The optimizer is Adadelta with a learning rate of 1.0. 
The soft version of MDAN has an additional parameter $\gamma=1/\tau$ which is the inverse of our temperature $\tau$. 
The chosen parameters are $\gamma=10.0$ for MDAN and $\gamma=0.9$ for our \shortname, which are selected from a wide range of candidate values. 

\textbf{Results and Analysis}. 
\cref{tab:amazon} summarizes the classification accuracies. 
The last column is the average accuracy of four domains, and the standard errors are calculated based on 20 runs. 
(1)~Most of the time, DANN with joint source data performs worse than SRC which has no adaptation. 
This may be because it does not adjust for the each source domain, and as a result, it fails to ignore irrelevant data to avoid negative transfer. 
(2)~Some domains are harder to adapt to than the others. 
For example, the accuracies of SRC and TAR on the Electronics domain are very close to each other, indicating that this requires little to no adaptation. 
Yet, our \shortname~is the closest to the TAR performance here. 
The Books domain is more challenging as the improvement over the SRC method is small, even though there exists a large gap between SRC and TAR. 
(3)~\shortname~is always within the best performing methods and significantly outperforms others in the Electronics domain. 
Note that MDMN additionally computes similarities within source domains in each iteration, which can be computationally expensive ($O(k^2)$ per iteration) if the number of source domains is large. 
Instead, our method focuses on the discrepancy between source and target domains ($O(k)$ per iteration) and can achieve similar performance on this problem. 

\begin{wrapfigure}{r}{0.3\textwidth}
\centering
\vspace*{-0.18in}
\includegraphics[width=0.3\textwidth]{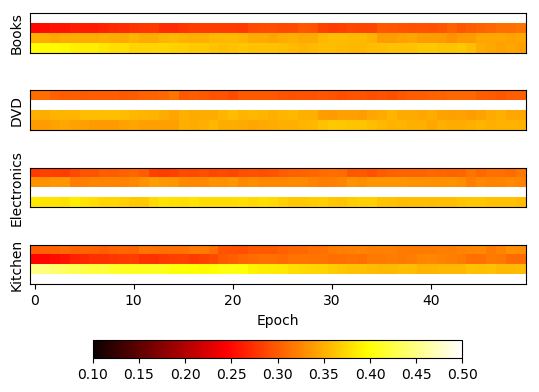}
\caption{Heatmap of $\alphavec$ for the Amazon data.}
\label{fig:alpha_heatmap}
\end{wrapfigure}

\cref{fig:alpha_heatmap} shows the evolution of $\alphavec$ values for the Amazon dataset, where every row in the subfigure corresponds to one domain over 50 epochs, evaluated at the end of the epochs. 
To avoid noisy values due to small mini-batch size, the values are exponential moving averages with decay rate of 0.9. 
As Electronics and Kitchen are more related to each other than Books and DVD, their respective weights remain higher during training. This is reasonable since we have overlapping products (e.g., blenders) in both domains. 
The $\alpha$s are changing dynamically during training, showing the flexibility of our method to adjust domain importance when needed.

\begin{table}[ht]
\caption{Classification accuracy (\%) of the target sentiment datasets. Mean and standard error over 20 runs. The best method(s) (excluding TAR) based on one-sided Wilcoxon signed-rank test at the 5\% significance level is(are) shown in bold for each domain.}
\label{tab:amazon}
\centering
\begin{tabular}{c||cccc|c} \hline
Method & Books & DVD & Electronics & Kitchen & Avg.\\ \hline \hline
SRC & 79.15 {\tiny $\pm$ 0.39} & 80.38 {\tiny $\pm$ 0.30} & 85.48 {\tiny $\pm$ 0.10} & 85.46 {\tiny $\pm$ 0.34} & 82.62 {\tiny $\pm$ 0.20} \\ \hline
DANN & 79.13 {\tiny $\pm$ 0.29} & 80.60 {\tiny $\pm$ 0.29} & 85.27 {\tiny $\pm$ 0.14} & 85.56 {\tiny $\pm$ 0.28} & 82.64 {\tiny $\pm$ 0.14} \\ \hline
MDAN & \textbf{79.99} {\tiny $\pm$ 0.20} & \textbf{81.66} {\tiny $\pm$ 0.19} & 84.76 {\tiny $\pm$ 0.17} & 86.82 {\tiny $\pm$ 0.13} & 83.31 {\tiny $\pm$ 0.08} \\ \hline
MDMN & \textbf{80.13} {\tiny $\pm$ 0.20} & \textbf{81.58} {\tiny $\pm$ 0.21} & 85.61 {\tiny $\pm$ 0.13} & \textbf{87.13} {\tiny $\pm$ 0.11} & \textbf{83.61} {\tiny $\pm$ 0.07} \\ \hline
\shortname & \textbf{79.93} {\tiny $\pm$ 0.19} & \textbf{81.57} {\tiny $\pm$ 0.16} & \textbf{85.75} {\tiny $\pm$ 0.16} & \textbf{87.15} {\tiny $\pm$ 0.14} & \textbf{83.60} {\tiny $\pm$ 0.08} \\ \hline\hline
TAR & 84.10 {\tiny $\pm$ 0.13} & 83.68 {\tiny $\pm$ 0.12} & 86.11 {\tiny $\pm$ 0.32} & 88.72 {\tiny $\pm$ 0.14} & 85.65 {\tiny $\pm$ 0.09} \\ \hline
\end{tabular}
\end{table}

\subsection{Digit Recognition}

\textbf{Setup}. 
Following the setting from previous work~\citep{ganin2016domain,zhao2018adversarial}, we use the four digit recognition datasets in this experiment (MNIST, MNIST-M, SVHN and Synth).
MNIST is a well-known gray-scale images for digit recognition, and MINST-M~\citep{ganin2015unsupervised} is a variant where the black and white pixels are masked with color patches. 
Street View House Number~(SVHN)~\citep{netzer2011reading} is a standard digit dataset taken from house numbers in Google Street View images. 
Synthetic Digits~(Synth)~\citep{ganin2015unsupervised} is a synthetic dataset that mimic SVHN using various transformations. 
One of the four datasets is chosen as unlabelled target domain in turn and the other three are used as labelled source domains.

MNIST images are resized to $32\times32$ and represented as 3-channel color images in order to match the shape of the other three datasets. 
Each domain has its own given training and test sets when downloaded. 
Their respective training sample sizes are 60000, 59001, 73257, 479400, and the respective test sample sizes are 10000, 9001, 26032, 9553. 
In each run, 20000 images are randomly sampled from each domain's training set as actual labelled source or unlabelled target training examples, and 9000 images are randomly sampled from each domain's test set as actual test examples for evaluation. 
The model structure is shown in \cref{appendix:digit_exp}. 
There is no dropout and the hyper-parameters are chosen based on cross-validation. 
It is trained for 50 epochs and the mini-batch size is 128 per domain. 
The optimizer is Adadelta with a learning rate of 1.0. 
Validation selected $\gamma=0.5$ for MDAN and $\gamma=0.1$ for \shortname. 

\textbf{Results and Analysis}. 
\cref{tab:digits} shows the classification accuracy of each target dataset over 20 runs. 
(1)~All methods except DANN can consistently outperform SRC. 
Again, without proper adjustment for each source domain, DANN with joint source data can perform worse than SRC. 
This suggests the importance of ignoring irrelevant data to avoid negative transfer. 
(2)~\shortname~significantly outperforms MDAN and MDMN across all four domains, especially on the MNIST-M and SVHN domains. 
Notice that even though MDAN and MDMN have generalization guarantees, they both resort to ad-hoc aggregation rules to combine the source domains during training.
Instead, our aggregation~(\cref{eq:sharpmax}) is a direct optimization of the upper bound~(\cref{thm:multi}) thus is theoretically justified and empirically superior for this problem. 

\begin{table}[ht]
\caption{Classification accuracy (\%) of the target digit datasets. Mean and standard error over 20 runs. The best method (excluding TAR) based on one-sided Wilcoxon signed-rank test at the 5\% significance level is shown in bold for each domain.}
\label{tab:digits}
\centering
\begin{tabular}{c||cccc|c} \hline
Method & MNIST & MNIST-M & SVHN & Synth & Avg.\\ \hline \hline
SRC & 96.78 {\tiny $\pm$ 0.08} & 60.80 {\tiny $\pm$ 0.21} & 68.99 {\tiny $\pm$ 0.69} & 84.09 {\tiny $\pm$ 0.27} & 77.66 {\tiny $\pm$ 0.14} \\ \hline
DANN & 96.41 {\tiny $\pm$ 0.13} & 60.10 {\tiny $\pm$ 0.27} & 70.19 {\tiny $\pm$ 1.30} & 83.83 {\tiny $\pm$ 0.25} & 77.63 {\tiny $\pm$ 0.35} \\ \hline
MDAN & 97.10 {\tiny $\pm$ 0.10} & 64.09 {\tiny $\pm$ 0.31} & 77.72 {\tiny $\pm$ 0.60} & 85.52 {\tiny $\pm$ 0.19} & 81.11 {\tiny $\pm$ 0.21} \\ \hline
MDMN & 97.15 {\tiny $\pm$ 0.09} & 64.34 {\tiny $\pm$ 0.27} & 76.43 {\tiny $\pm$ 0.48} & 85.80 {\tiny $\pm$ 0.21} & 80.93 {\tiny $\pm$ 0.16} \\ \hline
\shortname & \textbf{98.09} {\tiny $\pm$ 0.03} & \textbf{67.06} {\tiny $\pm$ 0.14} & \textbf{81.58} {\tiny $\pm$ 0.14} & \textbf{86.79} {\tiny $\pm$ 0.09} & \textbf{83.38} {\tiny $\pm$ 0.06} \\ \hline\hline
TAR & 99.02 {\tiny $\pm$ 0.02} & 94.66 {\tiny $\pm$ 0.10} & 87.40 {\tiny $\pm$ 0.17} & 96.90 {\tiny $\pm$ 0.09} & 94.49 {\tiny $\pm$ 0.07} \\ \hline
\end{tabular}
\end{table}

\section{Conclusion and Future Work}
\label{sec:conclude}
Our work uses the discrepancy~\citep{mansour2009domain,cortes2019adaptation} to derive a finite-sample generalization bound for multi-source to single-target adaptation. 
Our theorem shows that, in order to achieve the best possible generalization upper bound for a target domain, we need to trade-off between including all source domains to increase effective sample size and removing source domains that are underperforming or not similar to the target domain. 
Based on this observation, we derive an algorithm, \longname~(\shortname), that can dynamically adjust the weight of each source domain during end-to-end training. 
Experiments on sentiment analysis and digits recognition show that \shortname~outperforms state-of-the-art alternatives. 
Recent analysis~\citep{zhao2019learning,johansson2019support} show that solely focusing on learning domain invariant features can be problematic when the marginal label distributions on $\Ycal$ between source and target domains are significantly different. 
Thus it makes sense to take $\eta_{\Hcal}$ into consideration when a small amount of labelled data is available for the target domain, which we will explore in the future.

\bibliography{ref}
\bibliographystyle{apalike}

\newpage
\appendix
\section{Proof of Theorem~\ref{thm:multi}}
\label{appendix:proof}

\addtocounter{theorem}{-1}
\begin{theorem}
Given $k$ source domains datasets $\{(x^{(i)}_j,y^{(i)}_j):i\in[k],j\in[m]\}$ with $m$ iid examples each where $\Shat_i=\{x^{(i)}_j\}$ and $y^{(i)}_j=f_{S_i}(x^{(i)}_j)$, for any $\alphavec\in\Delta=\{\alphavec:\alpha_i\ge 0,\sum_i\alpha_i=1\}, \delta\in(0,1)$, and $\forall h\in\Hcal$, w.p.\ at least $1-\delta$
\begin{equation*}
\Lcal_T(h,f_T)\le
\sum_i \alpha_i\left(\Lcal_{\Shat_i}(h,f_{S_i})
+ \disc(T,S_i) + 2\Rfrak_m(\Hcal_{S_i})
+ \eta_{\Hcal,i}\right)
+ \|\alphavec\|_2 M_S \sqrt{\frac{\log(1/\delta)}{2m}},
\end{equation*}
where $\Hcal_{S_i}=\{x\mapsto L(h(x),f_{S_i}(x)):h\in\Hcal\}$ is the set of functions mapping $x$ to the corresponding loss,
$\eta_{\Hcal,i}$ is a constant similar to \cref{eq:eta} with $\Qhat=\Shat_i,\ \Phat=\That$ and $M_S=\sup_{i\in[k],x\in\Xcal,h\in\Hcal}L(h(x),f_{S_i}(x))$ is the upper bound on loss on the source domains. 
\end{theorem}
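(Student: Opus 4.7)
The strategy is to upgrade Theorem~\ref{thm:single} to the multi-source setting in three movements: (i)~produce a deterministic per-domain inequality, (ii)~form its $\alpha$-weighted average over $i$, and (iii)~concentrate the resulting weighted statistic \emph{jointly} across all $km$ independent samples, so that $\|\alphavec\|_2$ emerges in place of $\sum_i\alpha_i$.

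First, the deterministic portion of the single-source argument---triangle inequality, $\mu$-admissibility of $L$, and the definition of $\disc$---applied to each source domain separately yields, for every $h\in\Hcal$ and every $i\in[k]$, the population inequality
\[
\Lcal_T(h,f_T) \le \Lcal_{S_i}(h,f_{S_i}) + \disc(T,S_i) + \eta_{\Hcal,i},
\]
obtained by inserting an auxiliary $h^*_i\in\Hcal$ that realizes the minimum in $\eta_{\Hcal,i}$. Multiplying by $\alpha_i\ge 0$ and summing (using $\sum_i\alpha_i=1$) gives
\[
\Lcal_T(h,f_T) \le \sum_i \alpha_i\bigl(\Lcal_{S_i}(h,f_{S_i}) + \disc(T,S_i) + \eta_{\Hcal,i}\bigr).
\]

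Next, to pass from true source risks to empirical ones uniformly over $h$, I would define
\[
\Phi(\Shat_1,\dots,\Shat_k) = \sup_{h\in\Hcal}\sum_i \alpha_i\bigl(\Lcal_{S_i}(h,f_{S_i}) - \Lcal_{\Shat_i}(h,f_{S_i})\bigr).
\]
The $km$ sample points are mutually independent, and replacing a single point in $\Shat_i$ alters $\Phi$ by at most $\alpha_i M_S/m$. Since $\sum_i m\,(\alpha_i M_S/m)^2 = M_S^2\|\alphavec\|_2^2/m$, McDiarmid's inequality yields, with probability at least $1-\delta$,
\[
\Phi \le \EE[\Phi] + M_S\,\|\alphavec\|_2\,\sqrt{\frac{\log(1/\delta)}{2m}}.
\]
A standard ghost-sample-plus-Rademacher symmetrization, followed by sub-additivity of the supremum across the $k$ summands $\bigl(\sup_h\sum_i a_i(h)\le\sum_i\sup_h a_i(h)\bigr)$, then gives $\EE[\Phi]\le 2\sum_i \alpha_i\,\Rfrak_m(\Hcal_{S_i})$. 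Chaining this into the weighted inequality from the previous step produces exactly the claimed bound.

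The main obstacle is getting the $\|\alphavec\|_2$ factor in the concentration term. A na\"ive route---applying Theorem~\ref{thm:single} once per source and convex-combining after a union bound over $i$---would yield a term of order $\sum_i \alpha_i\sqrt{\log(k/\delta)/(2m)}$, which is strictly weaker (by a factor of $\sqrt{k}$ in the uniform-weight case) and picks up an extra $\log k$. The trick is therefore to defer concentration until after the weights have been applied and invoke McDiarmid once on the scalar statistic $\Phi$: the across-domain independence of the samples is precisely what makes the per-coordinate bounded-differences constants combine in $\ell_2$ rather than $\ell_1$, and this is what ultimately drives the $\|\alphavec\|_2$ trade-off that motivates the algorithm in~\cref{sec:algo}.
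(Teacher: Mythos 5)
Your proposal is correct and follows essentially the same route as the paper's proof: the same single McDiarmid application with per-coordinate constants $\alpha_i M_S/m$ yielding the $\|\alphavec\|_2$ factor, the same symmetrization giving $2\sum_i\alpha_i\Rfrak_m(\Hcal_{S_i})$, and the same per-domain invocation of the single-source discrepancy bound. The only (immaterial) difference is ordering --- you apply the deterministic cross-domain inequality first and concentrate only the estimation error $\sup_h\sum_i\alpha_i(\Lcal_{S_i}-\Lcal_{\Shat_i})$, whereas the paper folds $\Lcal_T(h,f_T)$ into $\Phi$ and peels off the deterministic part inside $\EE[\Phi]$.
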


\begin{proof}
The proof is similar to \citet{cortes2019adaptation,zhao2018adversarial}.
Given $\alphavec\in\Delta$, the mixture $\Shat=\sum_i\alpha_i \Shat_i$ can be considered as the joint source data with $km$ points, where a point $x^{(i)}$ from $\Shat_i$ has weight $\alpha_i/m$. 
Define $\Phi=\sup_{h\in\Hcal}\Lcal_{T}(h,f_T) - \sum_i\alpha_i\Lcal_{\Shat_i}(h,f_{S_i})$. 
Changing a point $x^{(i)}$ from $\Shat_i$ will change $\Phi$ at most $\frac{M_S\alpha_i}{m}$. 
Using the McDiarmid's inequality, we have $\Pr(\Phi-\EE[\Phi]>\epsilon)\le\exp\left(-\frac{2\epsilon^2 m}{M_S^2\|\alphavec\|_2^2}\right)$.
As a result, for $\delta\in(0,1)$, w.p.\ at least $1-\delta$, the following holds for any $h\in\Hcal$
\[
\Lcal_{T}(h,f_T)
\le\sum_i\alpha_i\Lcal_{\Shat_i}(h,f_{S_i})
+ \EE[\Phi]
+ \|\alphavec\|_2 M_S \sqrt{\frac{\log(1/\delta)}{2m}}.
\]
Now we bound $\EE[\Phi]$.
Let $\Hcal_{S_i}=\{x\mapsto L(h(x),f_{S_i}(x)):h\in\Hcal\}$. 
\[
\begin{split}
\EE[\Phi] 
&= \EE_{\Shat}
\left[\sup_{h\in\Hcal}
\Lcal_{T}(h,f_T)
-\sum_i\alpha_i\Lcal_{\Shat_i}(h,f_{S_i})\right] \\
&\le \EE_{\Shat}
\left[\sup_{h\in\Hcal}
\sum_i\alpha_i\Lcal_{S_i}(h,f_{S_i})
-\sum_i\alpha_i\Lcal_{\Shat_i}(h,f_{S_i})\right]
+ \sup_{h\in\Hcal}
\left(\Lcal_{T}(h,f_T)
-\sum_i\alpha_i\Lcal_{S_i}(h,f_{S_i})\right) \\
&\le \EE_{\Shat} 
\left[\sum_i\alpha_i\sup_{h\in\Hcal}
\left(\Lcal_{S_i}(h,f_{S_i}) - \Lcal_{\Shat_i}(h,f_{S_i})\right)\right]
+\sum_i\alpha_i\sup_{h\in\Hcal}
\left(\Lcal_{T}(h,f_T) - \Lcal_{S_i}(h,f_{S_i})\right)
\\
&= \sum_i\alpha_i \EE_{\Shat_i} 
\left[\sup_{h\in\Hcal}
\left(\Lcal_{S_i}(h,f_{S_i}) - \Lcal_{\Shat_i}(h,f_{S_i})\right)\right]
+\sum_i\alpha_i\sup_{h\in\Hcal}
\left(\Lcal_{T}(h,f_T) - \Lcal_{S_i}(h,f_{S_i})\right)
\\
&\le 2\sum_i\alpha_i\Rfrak_m(\Hcal_{S_i})
+ \sum_i\alpha_i\sup_{h\in\Hcal}
\left(\Lcal_{T}(h,f_T)-\Lcal_{S_i}(h,f_{S_i})\right) \\
&\le \sum_i\alpha_i
\left(2\Rfrak_m(\Hcal_{S_i})
+ \disc(T,S_i) + \eta_{\Hcal,i}\right).
\end{split}
\]
where first and second inequalities are using the subadditivity of $\sup$, followed by the equality using the independence between the domains $\{\Shat_i\}$, the second last inequality is due to the standard ``ghost sample'' argument in terms of the Rademacher complexity
and the last inequality is due to \citet[Proposition~8]{cortes2019adaptation} for each individual $S_i$. 
\end{proof}

\newpage
\section{Jacobian}
\label{appendix:jac}

Here we calculate the Jacobian $J_{ij}=\partial\alpha_i/\partial z_j$ for \cref{eq:sharpmax} in the main text:
\[
\alphavec^*=[\zvec-\nu^*\onevec]_+
/\|[\zvec-\nu^*\onevec]_+\|_1.
\]

In the following, we write $\alphavec=\alphavec^*,\nu=\nu^*$ to simplify notations. 
Let $S=\{i:z_i-\nu>0\}$ be the support of the probability vector $\alphavec$. 
$J_{ij}=0$ if $i\notin S$ or $j\notin S$ since $\alpha_i=0$ in the former case while $z_j$ does not contribute to the $\alphavec$ in the latter case.
Now consider the case $i,j\in S$.
Let $K=\|[\zvec-\nu\onevec]_+\|_1=\sum_{j\in S}(z_j-\nu)$.
Then $\alpha_i=(z_i-\nu)\cdot \frac{1}{K}$ and
\begin{equation}
\frac{\partial\alpha_i}{\partial z_j}
=\left(\delta_{i=j}-\frac{\partial\nu}{\partial z_j}\right)
\cdot\frac{1}{K}
-\frac{1}{K^2}
\cdot\frac{\partial K}{\partial z_j}
\cdot(z_i-\nu)
=\frac{1}{K}\left(\delta_{i=j}-\frac{\partial\nu}{\partial z_j}
-\frac{\partial K}{\partial z_j}\cdot\alpha_i\right),
\label{eq:alpha_i_z_j_nu_K}
\end{equation}
where $\delta_{i=j}$ is the indicator or delta function. 
Now we compute $\frac{\partial\nu}{\partial z_j}$ and $\frac{\partial K}{\partial z_j}$.
By the definition of $\nu$, we know that
\begin{align}
&\sum_{j\in S} (z_j-\nu)^2
= |S|\nu^2 - 2\nu\sum_{j\in S} z_j + \sum_{j\in S}z_j^2
= 1 \nonumber\\
\Longrightarrow\quad&
\nu = \frac{\sum_{j\in S}z_j}{|S|}
-\frac{\sqrt{A}}{|S|}
\quad\text{where}\quad
A = \left(\sum_{j\in S}z_j\right)^2
-|S|\left(\sum_{j\in S}z_j^2-1\right)
\nonumber\\
\Longrightarrow\quad&
\frac{\partial \nu}{\partial z_j}
=\frac{1}{|S|}
-\frac{B_j}{|S|\sqrt{A}}
\quad\text{where}\quad
B_j=\sum_{j'\in S}z_{j'}-|S|z_j
\label{eq:nu_der}
\end{align}
The first right-arrow is due to the quadratic formula and realizing that $\sum_{j\in S}z_j/|S|$ is the mean of the supported $z_j$ so $\nu$ must be smaller than it (i.e., we take $-$ in the $\pm$ of the quadratic formula, otherwise some of the $z_j$ will not be in the support anymore). 
And
\begin{equation}
\frac{\partial K}{\partial z_j} 
= 1 - |S|\cdot\frac{\partial\nu}{\partial z_j}
= \frac{B_j}{\sqrt{A}}.
\label{eq:K_der}
\end{equation}
Plugging \cref{eq:nu_der} and \cref{eq:K_der} in \cref{eq:alpha_i_z_j_nu_K} gives
\[
\frac{\partial\alpha_i}{\partial z_j}
=\frac{1}{K}
\left[
\delta_{i=j}-\frac{1}{|S|}
+\frac{B_j}{\sqrt{A}}
\cdot\left(\frac{1}{|S|}-\alpha_i\right)
\right]
\]
Note that
\[
\frac{B_j}{K}=\frac{\sum_{j'\in S}z_{j'}-|S|z_j}
{\sum_{j'\in S}(z_{j'}-\nu)}
=\frac{\sum_{j'\in S}(z_{j'}-\nu)+|S|(\nu-z_j)}
{\sum_{j'\in S}(z_{j'}-\nu)}
=1-|S|\alpha_j.
\]
Then
\[
J_{ij}
=\frac{\partial\alpha_i}{\partial z_j}
=\frac{1}{K}\left(\delta_{i=j}-\frac{1}{|S|}\right)
+\frac{|S|}{\sqrt{A}}
\left(\frac{1}{|S|}-\alpha_i\right)
\left(\frac{1}{|S|}-\alpha_j\right).
\]
In matrix form,
\[
J=\frac{1}{K}
\left(\Diag(\svec)-\frac{\svec\svec^\top}{|S|}\right)
+\frac{|S|}{\sqrt{A}}
\left(\frac{\svec}{|S|}-\alphavec\circ\svec\right)
\left(\frac{\svec}{|S|}-\alphavec\circ\svec\right)^\top,
\]
where $\svec=[s_1,\dots,s_k]^\top$ is a vector indicating the support $s_i=\delta_{i\in S}$ and $\circ$ is element-wise multiplication. 
More often, we need to compute its multiplication with a vector $\vvec$
\[
J\vvec=
\frac{\svec}{K}\circ
\left(\vvec
-\frac{\svec^\top\vvec}{|S|}\right)
+\frac{|S|}{\sqrt{A}}
\left(\frac{\svec}{|S|}-\alphavec\circ\svec\right)
\left(\frac{\svec}{|S|}-\alphavec\circ\svec\right)^\top
\vvec.
\]
Note that all quantities except $A$ have been computed during the forward pass of calculating \cref{eq:sharpmax}.
$A$ can be computed in $O(|S|)$ time so the overall computation is still $O(k)$ since $|S|\le k$.

\newpage
\section{Model Architecture for Digit Recognition}
\label{appendix:digit_exp}

\begin{figure}[h]
\centering
\includegraphics[width=0.95\textwidth]{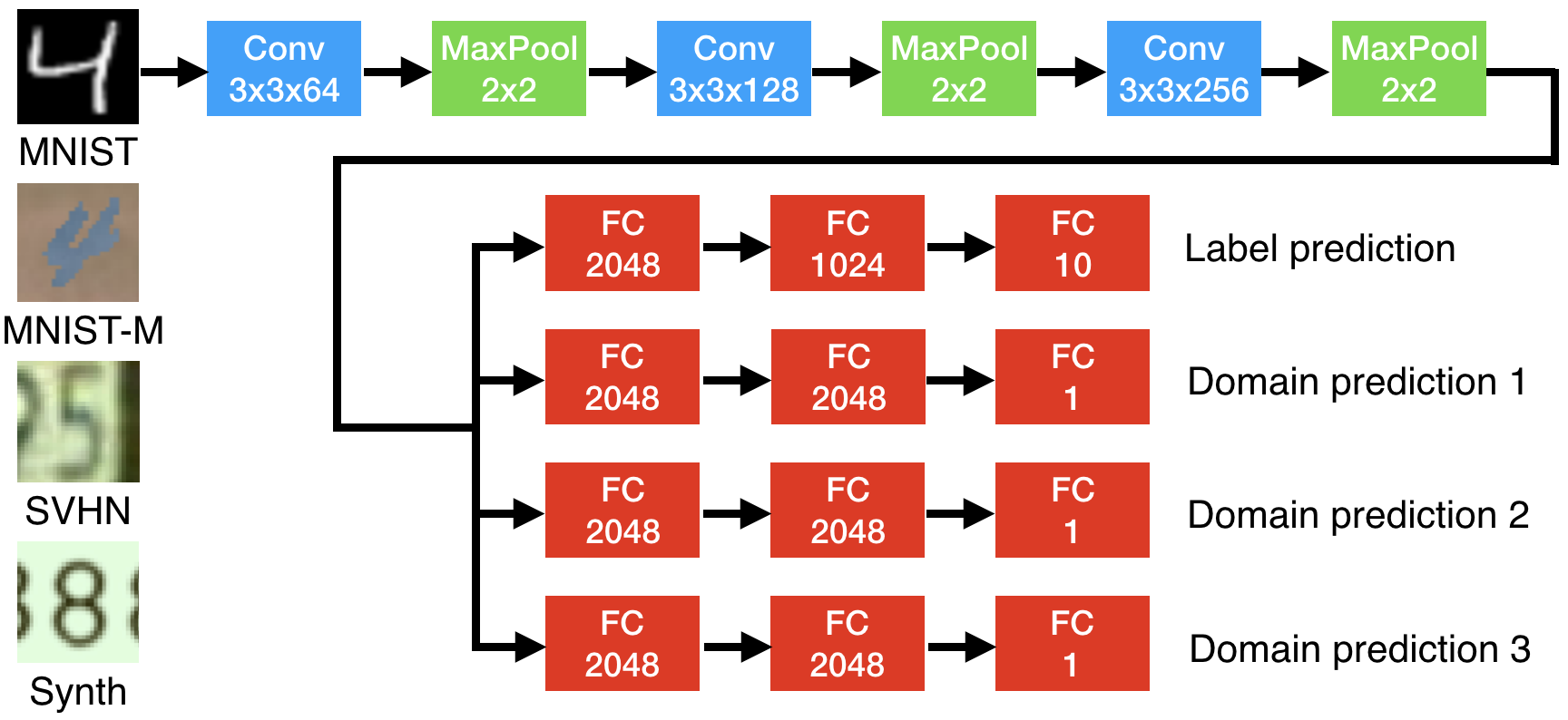}
\caption{Model architecture for the digit recognition.}
\label{fig:conv}
\end{figure}

\end{document}